\renewcommand{\L}{\mathcal{L}}
\newcommand{\st}{\hat{\theta}^{\star}}
\newcommand{\nt}{\hat{\theta}^{Nt}_{-z_i}}
\newcommand{\bt}{\hat{\theta}^{\star}_{-z_i}}
\newtheorem{thm}{Theorem}
\newtheorem{pro}[thm]{Proposition}
\def\BibTeX{{\rm B\kern-.05em{\sc i\kern-.025em b}\kern-.08em
    T\kern-.1667em\lower.7ex\hbox{E}\kern-.125emX}}
\begin{document}

\title{Model-Agnostic Explanations using \\ Minimal Forcing Subsets
}

\author{
\IEEEauthorblockN{Xing Han}
\IEEEauthorblockA{\textit{Department of Electrical and Computer Engineering} \\
\textit{University of Texas at Austin}\\
Austin, USA \\
\texttt{aaronhan223@utexas.edu}}
\and
\IEEEauthorblockN{Joydeep Ghosh}
\IEEEauthorblockA{\textit{Department of Electrical and Computer Engineering} \\
\textit{University of Texas at Austin}\\
Austin, USA \\
\texttt{jghosh@utexas.edu}}
}

\maketitle

\begin{abstract}
How can we find a subset of training samples that are most responsible for a specific prediction made by a complex black-box machine learning model? More generally, how can we explain the model's decisions to end-users in a transparent way? We propose a new model-agnostic algorithm to identify a minimal set of training samples that are indispensable for a given model's decision at a particular test point, i.e., the model's decision would have changed upon the removal of this subset from the training dataset. Our algorithm identifies such a set of ``indispensable'' samples iteratively by solving a constrained optimization problem. Further, we speed up the algorithm through efficient approximations and provide theoretical justification for its performance. To demonstrate the applicability and effectiveness of our approach, we apply it to a variety of tasks including data poisoning detection, training set debugging and understanding loan decisions. The results show that our algorithm is an effective and easy-to-comprehend tool that helps to better understand local model behavior, and therefore facilitates the adoption of machine learning in domains where such understanding is a requisite. 
\end{abstract}

\section{Introduction}
Machine learning has achieved ground-breaking progress in various sub-fields, from computer vision to speech recognition. Traditionally, performance measures such as accuracy or AUC were emphasized during algorithm design; however, as such models are increasingly deployed in high-stakes situations, practitioners have realized that the lack of transparency of black-box models significantly limits their acceptance and deployment in automated decision making systems that directly affect humans in significant ways\cite{carvalho2019machine, doshi2017towards}. For example, deep learning has been widely used to make movie recommendations, as the impact of an occasional poor recommendation for a client is tolerable. In contrast, a wrong decision provided by a machine learning model for medical diagnosis, autonomous-driving or financial transactions, can lead to more severe outcomes: irreversible damage to health, fatal car accidents, or multi-million dollar loss of capital. In such high-stakes scenarios, it would not be possible (and actually not reasonable) to rely on machine learning algorithms to make critical decisions without developing a solid understanding of the model behavior and communicating these insights with the end-users in a transparent way.


Motivated by the growing interest in model interpretation, a variety of techniques, such as influence functions \cite{koh2017understanding}, prototypes \cite{bien2011prototype}, and representer points \cite{yeh2018representer}, have been proposed to provide insights into black-box machine learning models. However, most of the existing explanation techniques are quite complicated and meant for a data science audience. There is a great gap in developing explanation methods that are able to communicate the model decisions clearly and intuitively to a broader community \cite{bhatt2020explainable}. Such transparent user-oriented model interpretations are crucial to build trust in machine learning systems and facilitate their applications in more real-world scenarios. This paper aims to help fill this gap by introducing an effective and easy-to-comprehend algorithm to cast light on local model predictions, especially erroneous ones.

We frame our problem setting as follows: assume $ \mathcal A(D, x^*) \in \{0, 1\}$ is a binary decision made on data instance $x^*$ by algorithm $\mathcal A$ based on training data $D$. We want to understand which part of the training data $D$ was most influential for a given decision. We say that a subset of training sample(s), denoted by $\mathcal{S}$, is a forcing set for the decision $a(x^*) = \mathcal A(D, x^*)$, 
if the decision would be different if this set was not present in the training data, i.e.,  $\mathcal A(D, x^*)  \neq  \mathcal A(D\setminus \mathcal{S}, x^*)$, where $D \setminus \mathcal{S}$ is the modified data-set by removing training sample(s) $\mathcal{S}$. Furthermore, we aim to identify a minimal subset $\mathcal{S}$ in an efficient way. We then show that such a set not only
provides a "prototype-based" explanation of model prediction, but can also be used in a variety of associated diagnostics that help make both model behavior and break-points more transparent.

Key aspects of our interpretation framework include:
\begin{enumerate}
    \item introduction of a sample-based local explanation method that identifies a minimal set of decision-related training samples in a ``knockout'' fashion
    \item development of an iterative algorithm and associated approximation methods to efficiently find such a set of training samples
    \item illustration of the benefits of our method in various settings, showing superior performance both quantitatively and qualitatively.
\end{enumerate}

\section{Related Work}
There is a rapidly expanding literature on model explainability in recent years, including both global interpretation approaches \cite{lakkaraju2016interpretable, ustun2016supersparse, wang2015falling} and local explanations \cite{baehrens2010explain, ribeiro2016should, kononenko2010efficient, ribeiro2018anchors, lundberg2017unified}. A global approach attempts to learn models that are more understandable due to the nature of their functional form rather than focussing on explaining specific decisions. For example, interpretable decision sets by \cite{lakkaraju2016interpretable} provides a joint framework for description and prediction, where the model is trained to maximize a multi-objective function, and produce succinct and explainable decision rules. Another approach is to learn a simpler, more understandable surrogate model that tries to mimic the behavior of a complex model.

On the other hand, local approaches aim to shed light on the model decisions at specific points. Two notable approaches to local explanations are prototype-based explanations and feature-based explanations. The former, which is related to case-based or exemplar-based reasoning, focus on identifying one or a small subset of samples that were most
influential for the specific decision being considered. Our algorithm belongs to this category. Prior work has developed other techniques for sample-based explanations. For example, the prototype selection method \cite{bien2011prototype} provides a set of ``representative'' samples chosen from the data set. The approach that influences our work the most is the use of influence functions to find a single most critical instance for a test point prediction \cite{koh2017understanding}. Essentially, this method computes sample influence based on perturbations of a test data point and has been subsequently expanded to determine a set of the most significant points of a pre-specified size \cite{khanna2018interpreting}. However, there is no guarantee that removing this set would have changed the outcome. Similarly, \cite{yeh2018representer} calculates representer values to measure the importance of each training point by decomposing the pre-activation prediction values based on a representer theorem. This method is limited to neural network models trained in a particular way. We will show that our method outperforms these two and other recent state-of-the-art methods in multiple domains.

In contrast, feature-based or feature attribution methods attempt to find the most important features determining the model's prediction. For deep neural network models, a typical feature-attribution-based method is saliency maps \cite{simonyan2013deep}, which is based on computing the gradient of the class score with respect to the input image for image classification problems. Another example is Layer-wise Relevance Propagation \cite{binder2016layer}, which decomposes the prediction of a deep neural network computed over a sample down to relevance scores for the single input dimensions of the sample. Furthermore, DeepLIFT \cite{shrikumar2017learning} improves layer-wise propagation by solving the saturation problem. 
Approaches such as SHAP are inspired by the Shapley value \cite{shapley1953value} from cooperative game theory, and are applicable for feature attribution for general tabular data, and just not tied to image understanding \cite{lundberg2017unified, chen2018shapley, strumbelj2010efficient, lundberg2020local, mase2019explaining, ghorbani2019data, ghorbani2020neuron}.
These methods have several nice properties but need to compute models over all subsets of features. Thus, in general, they suffer from computational problems, so providing an efficient solution is an active area of research \cite{kwon2020efficient}.

Our approach has the flavor of counterfactuals: in an alternate world where the set $\mathcal{S}$ was removed from the training set, the outcome would have been different. The notion of ``Counterfactual explanations" in the form of what change in an input point would have flipped the outcome was proposed in \cite{wachter2017counterfactual}, as a way to point to ``recourse". Such a view has also been used in various data types like document \cite{martens2014explaining}, image \cite{goyal2019counterfactual} and time series classifications \cite{karlsson2018explainable}. \cite{binns2018s, dodge2019explaining} have demonstrated that users prefer counterfactual explanations over case-based reasoning, while \cite{fernandez2020explaining} has also given examples to show that counterfactual explanations are superior to feature importance methods. Our approach has aspects of both sample-based and counterfactual explanations to provide more intuitive insights into what most affected a model's decision.

\section{Transparent Interpretation Framework}\label{sec:method}


\begin{figure*}[t]
\begin{minipage}{\textwidth}
\centering
\begin{tabular}{@{\hspace{-1.4ex}} c @{\hspace{-1.4ex}} @{\hspace{-1.4ex}} c @{\hspace{-1.4ex}} @{\hspace{-1.4ex}} c @{\hspace{-1.4ex}} @{\hspace{-1.4ex}} c @{\hspace{-1.4ex}}}
    \begin{tabular}{c}
    \includegraphics[width=.25\textwidth]{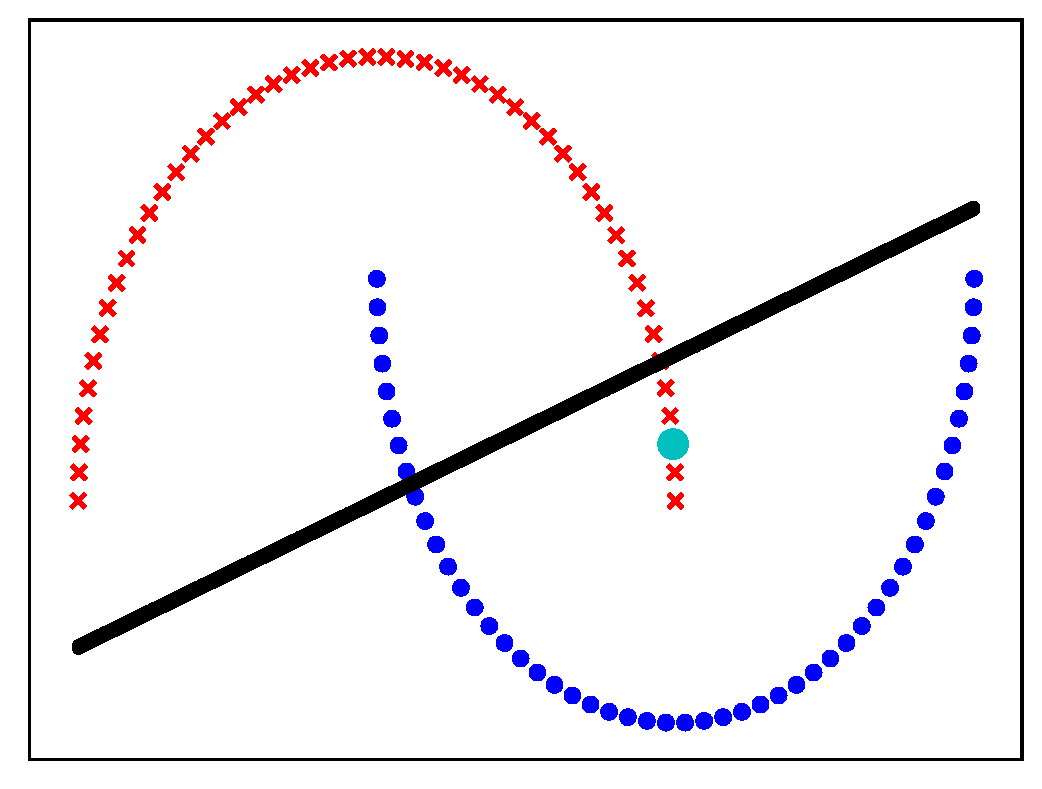}
    \\
    {\small{(a)}}
    \end{tabular} &
    \begin{tabular}{c}
    \includegraphics[width=.25\textwidth]{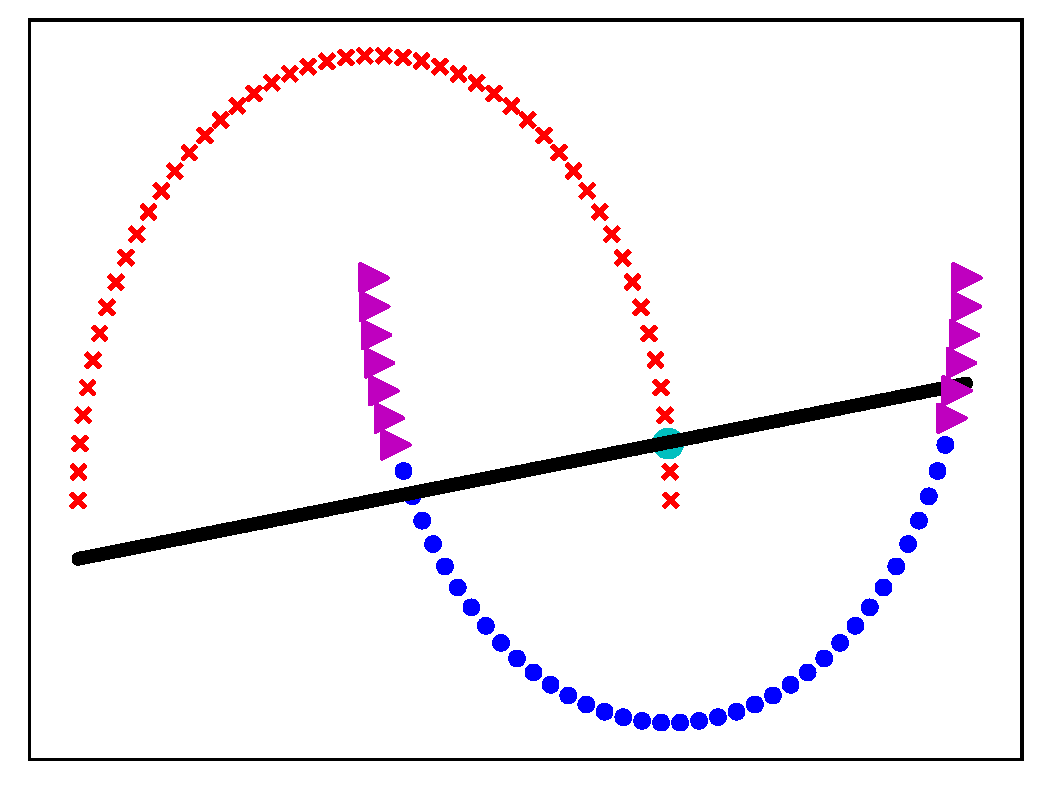}
    \\
    {\small{(b)}}
    \end{tabular} & 
    \begin{tabular}{c}
    \includegraphics[width=.25\textwidth]{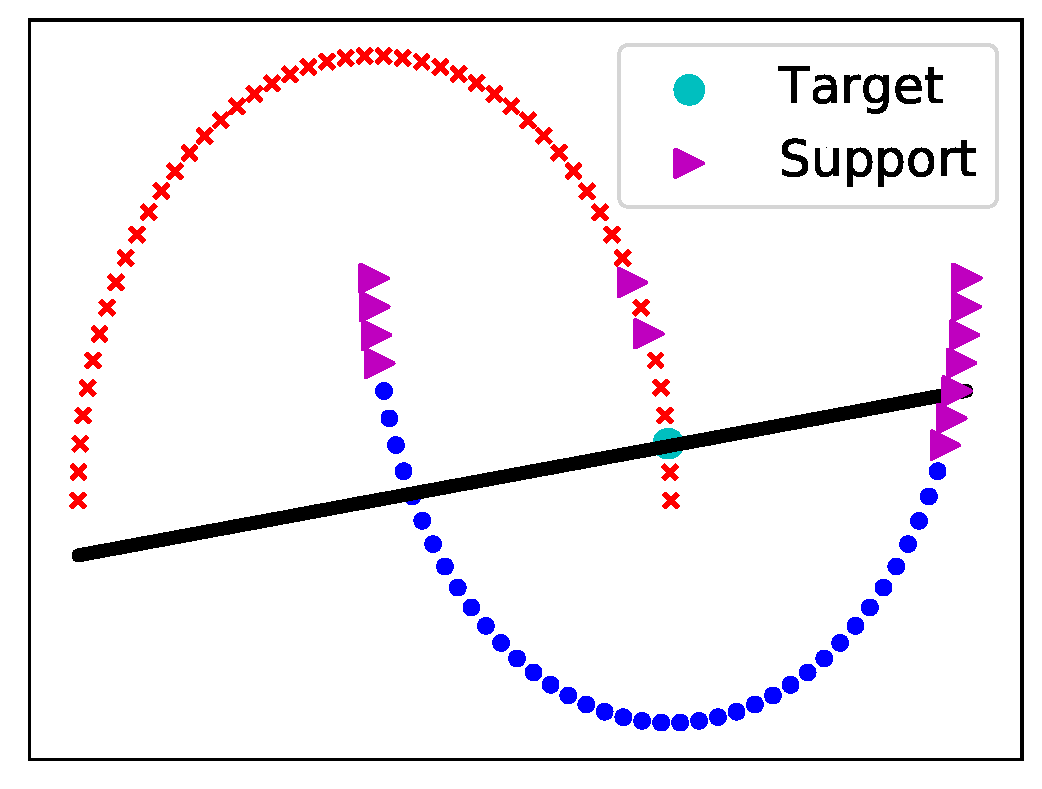} 
    \\
    {\small{(c)}}
    \end{tabular} &
    \begin{tabular}{c}
    \includegraphics[width=.25\textwidth]{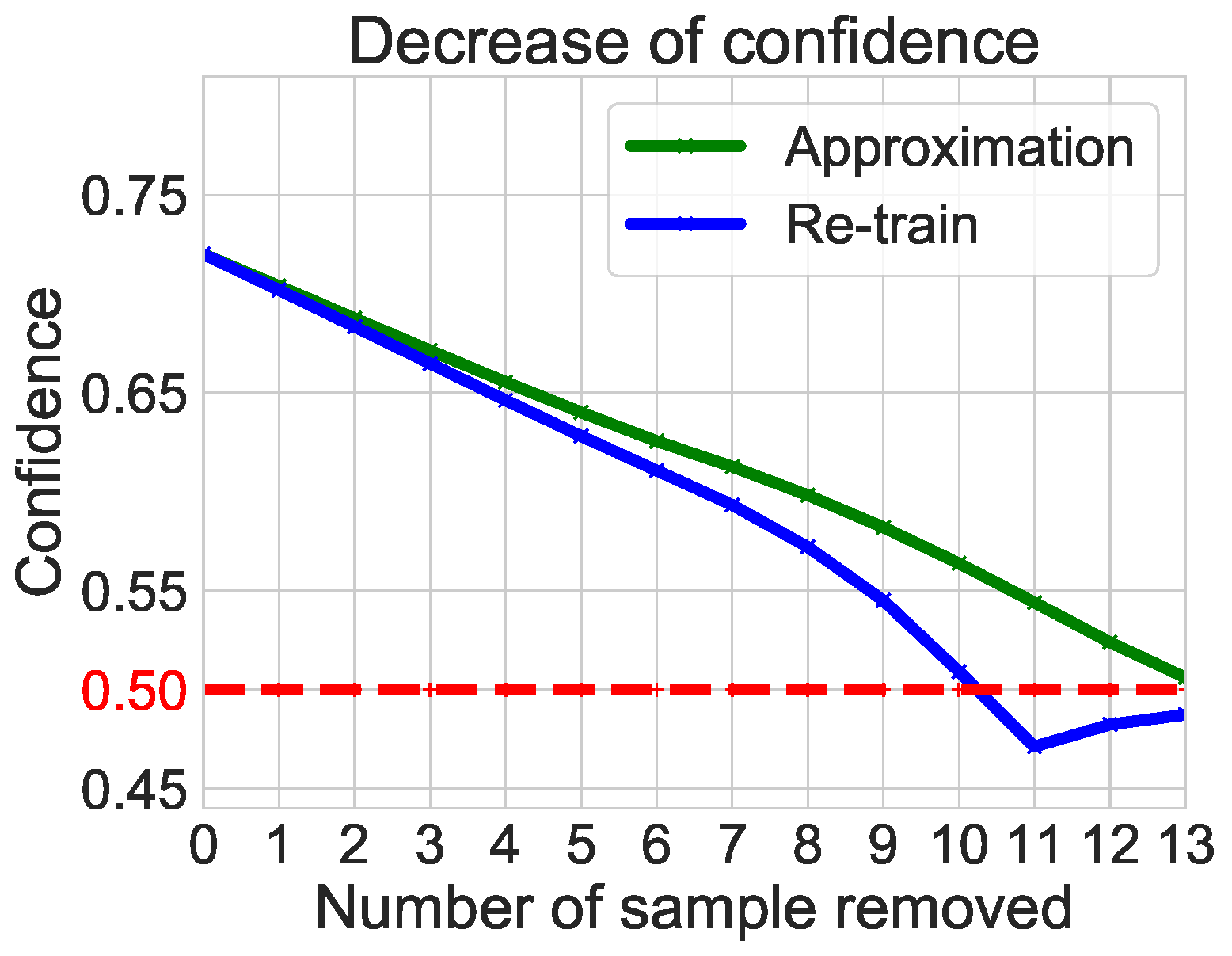}
    \\
    {\small{(d)}}
    \end{tabular}
    \end{tabular}
\end{minipage}
\caption[Caption for LOF]{Behavior of classifier on half-moon dataset. (a) Initial decision boundary. 
(b) Decision boundary after removing supports by approximation. 
(c) Decision boundary after removing supports by re-training.
(d) Decrease of confidence as supports removed\footnotemark.}
\label{fig:toy}
\end{figure*}



\paragraph{Problem Setup}
Consider a classification problem where the decision $a \in \{0, 1\}$ is binary,
and we can obtain an optimal classifier $\hat{\pi}$ using some machine learning algorithm $\mathcal{A}$ applied on a training dataset $\mathcal{D} = \{z_i:= (x_i, y_i)\}_{i=1}^n$ based on some objective loss:
\begin{align}
    \hat{\pi} \leftarrow \arg\min_{\theta} L(\theta) =\arg\min_{\theta} \frac{1}{n}\sum_{i = 1}^n l(z_i, \theta),\label{equ:unconstrain_loss}
\end{align}
where $l$ is the sample loss (e.g., cross entropy) of data instance $z_i$.
The goal of our method is to explain why the classifier $\hat{\pi}$ gives a prediction $a(x^*)$ instead of  $\bar{a}(x^*) = 1 - a(x^*)$ at a data point $x^*$.
We formulate the problem as finding a set of data instances which contributes the most to the claim $C$:
\begin{align*}
    p(c_0 ~|~ x^*, \hat{\pi}) > p(c_1~|~ x^*, \hat{\pi}),
\end{align*}
where $p(c_0~|~ x^*, \hat{\pi})$ is the prediction probability of decision $a(x^*)$ being class $0$ given by the classifier.


\paragraph{Explanation using Minimal Forcing Subset (MFS)}
To find the set of data points that can explain the claim $C$ efficiently,
we first consider the following counterfactual, which enforces the claim $C$ to be explained as the opposite:
\begin{align}
    \hat{\pi}_{c} \leftarrow \arg\min_{\theta}L(\theta), s.t. ~p(c_0| x^*,\hat{\pi}_{c}) < p(c_1 | x^*,\hat{\pi}_{c}) - \epsilon, \label{equ:constrain_obj}
\end{align}
where $\epsilon$ is a slack variable.
Solving \eqref{equ:constrain_obj} gives another solution that attempts to minimize the loss function but with the opposite decision at the data point $x^*$.
Due to the additional constraint, $L(\hat{\pi}_{c})$ must be greater than (or equal to) $L(\hat{\pi})$.
Recognizing that the data instance $z_{\hat{i}}$ contributing the most to the claim $C$ tends to be the one which violates constraints the most in the problem (\ref{equ:constrain_obj}), 
we can select the most responsible instance for the claim $C$ as 
\begin{equation}
    \hat{i} =\arg\max_{i}\left\{\max [l_{\epsilon, x^*}(z_i, \hat{\pi}_c) - l(z_i, \hat{\pi}), 0]\right\}.
    \label{eq:select_instance}
\end{equation}
Then, we remove $z_{\hat{i}}$ from the training dataset $\mathcal{D}$ and add $z_{\hat{i}}$ to the MFS $\mathcal{S}$. We subsequently solve \eqref{equ:unconstrain_loss} and \eqref{equ:constrain_obj} again but this time on the updated dataset $\mathcal{D}/{z_i}$, and select the second most responsible instance in the same way as in the previous round. We repeat this process until the difference between $L(\hat{\pi}_c)$ and $L({\hat{\pi}})$ is statistically insignificant.
This yields a set of data instances  $\mathcal{S}$ that is most responsible for the given statement. 

It is worth highlighting two features of our approach. First, the principle guiding the selection of the most responsible instance is intuitive and easy for users to grasp. 
Second, the exiting condition of the iteration process ensures the efficiency of the resulting $\mathcal{S}$. By efficiency, we mean the elements in $\mathcal{S}$ are quasi-minimized. Specifically, if $\mathcal{S}$ satisfies the explanation condition we defined above, then it is obvious that $\mathcal{S}$ plus some extra samples $e$ together also qualifies as an explanation. However, the union of $\mathcal{S}$ and $e$ is less efficient than $\mathcal{S}$ itself, as the former contains redundant elements. 

\begin{algorithm}[t]
\small
\caption{Iteratively Construct MFS}
\label{alg:explain}
\begin{algorithmic}[1]
\STATE Given dataset $\mathcal{D} = \{z_i := (x_i, y_i)\}_{i=1}^{n}$,  the test point $x^*$, MFS $\mathcal{S} = \emptyset$, loss function $\L$, its approximation $f$ and parameter $\theta$. 
\STATE Initialize $\mathcal{D}^\prime = \mathcal{D}$, original classifier $\hat{\pi}_{\mathcal{D}}$.
\REPEAT
\STATE Obtain constrained classifier $\hat{\pi}_{\mathcal{D}/\mathcal{S}, \epsilon, x^*}$ by solving\\ $\hat{\theta}^{\prime} = \arg\underset{\theta}{\min} \{f(\theta) |  
     \log p (c_1 | x^*) - \log p (c_0 | x^*) - \epsilon \geq 0\}$; 
\STATE Select data instance by \\
$\hat{i} =\arg\underset{i}{\max}\left\{\max [l_{\epsilon, x^*}(z_i, \hat{\pi}_{\mathcal{D}/\mathcal{S}, \epsilon, x^*}) - l(z_i, \hat{\pi}_{\mathcal{D}/\mathcal{S}}), 0]\right\}$; 
\STATE Add data instance $z_{\hat{i}}$ to set $\mathcal{S}$;
\STATE Update original classifier $\hat{\pi}_{\mathcal{D}/\mathcal{S}}$ using one-step Newton \\ $\hat{\theta}^{Nt}_{-z_{\hat{i}}} \leftarrow \hat{\theta} + \frac{1}{n} \mathcal{H}_{\hat{\theta}}^{-1} \nabla_{\theta} \L(z_{\hat{i}}, \hat{\theta})$; 
\STATE Remove data instance $z_{\hat{i}}$ from $\mathcal{D}^\prime$: $\mathcal{D}' := \mathcal{D}' / z_{\hat{i}}$; 
\UNTIL the difference between {$\L(\hat{\pi}_{\mathcal{D}/\mathcal{S}, \epsilon, x^*})$ and $\L(\hat{\pi}_{\mathcal{D}/\mathcal{S}})$ is statistically insignificant (see section A part b).}
\RETURN MFS $\mathcal{S}$
\end{algorithmic}
\end{algorithm}

\footnotetext{Gifs can be found at \href{tiny.cc/67j8bz}{tiny.cc/67j8bz}.}

\subsection{Efficiently Construct MFS}

\paragraph{Approximation Methods}
We proposed an iterative algorithm (\ref{alg:explain}) to construct the MFS. A few critical problems are how to efficiently solve the constrained optimization problem (\ref{equ:constrain_obj}), as well as to avoid re-training the model each step as we construct the MFS. To speed up the constrained optimization, we first use a second-order Taylor expansion of the loss function $\L(\theta)$ parameterized by $\theta$ at $\hat{\theta}$: 
\begin{equation}
    f(\theta) = \L(\hat{\theta}) + (\theta - \hat{\theta})^{\top} \nabla \L(\hat{\theta}) + \frac{1}{2} (\theta - \hat{\theta})^{\top} \mathcal{H}_{\hat{\theta}} (\theta - \hat{\theta}),
    \label{eq:quadratic}
\end{equation}
where $\mathcal{H}_{\hat{\theta}}$ denotes the Hessian matrix of the loss function at $\hat{\theta}$. We then formulate a quadratic programming problem subject to linear inequality constraint:

\begin{align}
    \hat{\theta}^{\prime} = \arg\underset{\theta}{\min} \{f(\theta) |  
     \log p (c_1 | x^*, \theta) - \log p (c_0 | x^*, \theta) - \epsilon \geq 0\}.  \nonumber
\end{align}
Parameter $\hat{\theta}^{\prime}$ under the constraint can be obtained by solving the above problem via the Karush–Kuhn–Tucker (KKT) conditions \cite{boyd2004convex}. We use $\hat{\theta}^{\prime}$ to update the original parameter $\hat{\theta}$ until a sufficient number of iterations is reached. 

As we proceed with algorithm (\ref{alg:explain}), we need to obtain a series of classifiers $\hat{\pi}_{\mathcal{D} / \mathcal{S}}$ while removing instances from $\mathcal{D}$. To avoid re-training at each iteration, after a data instance $z_i$ has been selected and removed from the training set, we approximate its effect on the model parameters using a one-step Newton update: 
\begin{equation}
    \hat{\theta}^{Nt}_{-z_i} \leftarrow \hat{\theta} + \frac{1}{n} \mathcal{H}_{\hat{\theta}}^{-1} \nabla_{\theta} \L(z_i, \hat{\theta}).
    \label{eq:one_step_newton}
\end{equation}
This update can be facilitated by borrowing the result from the quadratic formula (\ref{eq:quadratic}) to avoid the calculation of the Hessian-vector products, which is equivalent to $(\theta - \hat{\theta})$. It is applied in step 7 of algorithm (\ref{alg:explain}), but this could also be substituted with re-trained classifier on $\mathcal{D} / \mathcal{S}$. We will discuss the difference between these two choices, but the default method in the following sections is the one-step Newton method.

\paragraph{Exiting Conditions} \label{sec:conditions}
We evaluate the effect of removing the identified set of responsible samples by comparing $\L(\hat{\pi}_c)$ and $\L(\hat{\pi})$. The loss under the counterfactual optimization is inherently greater than the original loss due to the imposed constraint. However, their difference may not be statistically distinguishable when sufficient points are removed. 
We use a stopping criteria of $\L(\hat{\pi}_c) + \delta < \L(\hat{\pi})$, where $\delta$ is arbitrary small constant. With a small threshold $\delta$, this stopping condition guarantees that the claim is falsified when the selected points are removed. 


\paragraph{Illustration Using Simulated Data}
 We demonstrate the working mechanism of our algorithm in Figure \ref{fig:toy} by applying it to the half-moon dataset, which contains 100 samples divided into two classes. We first train a linear classifier and select a target data point that is wrongly classified with confidence 74\% (Figure \ref{fig:toy}(a)). MFS of the target is then iteratively constructed following algorithm (\ref{alg:explain}). As a comparison, we use both one-step approximation and re-training (Figure \ref{fig:toy}(b) and Figure \ref{fig:toy}(c)). We find that the approximation method identifies 13 samples to support its decision. In comparison, the re-training method selects 11 samples (after removing MFS, the target becomes an ambiguous point that lies on the decision boundary). Figure \ref{fig:toy}(d) compares the change of confidence as each support is removed. Regardless of computation time, the re-training approach reaches the boundary faster, and the $12^{th}$ and $13^{th}$ samples are selected to balance the boundary.

\subsection{Error Bound for One-step Newton}
We now provide theoretical analysis to bound the error of the one-step Newton approximation within finite number of iterations of algorithm (\ref{alg:explain}). Result shows that the approximation error $E_{Nt}$ decays at a rate of $O(1/(\lambda_{min} + \alpha)^3),$ where $\alpha$ is the parameter that controls regularization strength (not shown in Eq. (\ref{equ:unconstrain_loss})), and $\lambda_{min}$ is the smallest eigenvalue of $\mathcal{H}_{\hat{\theta}}$. 

\begin{pro} (\textbf{Error Bound}) \label{pro:error_bound}
Assume the model $F$ is well-optimized on dataset $\mathcal{D} = \{z_i:= (x_i, y_i)\}_{i=1}^n$ with strongly convex loss function $\L$, $F(\theta)$ is $L_F$-Lipschitz and the Hessian $\mathcal{H}_{\hat{\theta}}$ is $L_{\mathcal{H}}$-Lipschitz. Denote the parameter set that is trained with and without $z_i$ as $\st$ and $\bt$, then

\begin{equation}
    E_{Nt} = \left|F(\bt) - F(\nt)\right| \leq \frac{n L_F L_{\mathcal{H}} N_g^2}{(\lambda_{min} + \alpha)^3},
\end{equation}
where $N_g$ is defined to be the largest gradient norm of the training samples at $\st$, i.e., $N_g = \underset{1\leq i \leq n}{\max} \|\nabla_{\theta} \L(z_i, \st)\|_2$.
\end{pro}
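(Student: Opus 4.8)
The plan is to bound $E_{Nt}$ by splitting it into two pieces via the triangle inequality: first the error incurred by the one-step Newton parameter estimate $\nt$ relative to the true leave-one-out parameter $\bt$, and then propagate that parameter-space error through $F$ using its Lipschitz property. So I would write
\[
E_{Nt} = |F(\bt) - F(\nt)| \leq L_F \,\|\bt - \nt\|_2,
\]
and the whole problem reduces to bounding $\|\bt - \nt\|_2$, the gap between the exact retrained-without-$z_i$ solution and its Newton approximation.

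The main work is to control $\|\bt - \nt\|_2$. Here I would follow the standard influence-function error analysis. The exact leave-one-out estimator $\bt$ minimizes the empirical loss on $\mathcal{D} \setminus \{z_i\}$, while $\nt$ is the single Newton step from $\st$ toward that minimizer. Writing a Taylor expansion of the gradient of the leave-one-out objective around $\st$, the Newton step is exact up to the remainder term, which is governed by the Lipschitz constant $L_{\mathcal{H}}$ of the Hessian: schematically $\|\bt - \nt\|_2 = O(L_{\mathcal{H}} \|\bt - \st\|_2^2 / (\lambda_{min}+\alpha))$, because each inverse-Hessian factor contributes a $1/(\lambda_{min}+\alpha)$ (strong convexity with regularization strength $\alpha$ guarantees the regularized Hessian has smallest eigenvalue at least $\lambda_{min}+\alpha$, hence $\|\mathcal{H}^{-1}\| \leq 1/(\lambda_{min}+\alpha)$). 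Then I would bound the first-order displacement $\|\bt - \st\|_2$ itself: removing one sample perturbs the gradient of the objective by $\tfrac1n \nabla_\theta \L(z_i,\st)$, so $\|\bt - \st\|_2 \leq \tfrac{1}{n(\lambda_{min}+\alpha)}\|\nabla_\theta \L(z_i,\st)\|_2 \leq \tfrac{N_g}{n(\lambda_{min}+\alpha)}$ by definition of $N_g$. Squaring this gives a factor $N_g^2/(n^2(\lambda_{min}+\alpha)^2)$; combined with the extra $L_{\mathcal{H}}/(\lambda_{min}+\alpha)$ from the quadratic remainder, the $1/n^2$ partially cancels against the $1/n$ appearing in the Newton update scaling so that one is left with $L_{\mathcal{H}} N_g^2/(n(\lambda_{min}+\alpha)^3)$ — wait, more carefully I expect the bookkeeping to yield $\|\bt-\nt\|_2 \lesssim n L_{\mathcal{H}} N_g^2/(\lambda_{min}+\alpha)^3$ once the factors of $n$ from the per-sample vs.\ averaged-loss normalization are tracked, matching the claimed $n L_F L_{\mathcal{H}} N_g^2/(\lambda_{min}+\alpha)^3$ after multiplying by $L_F$.

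The step I expect to be the main obstacle is the careful second-order expansion that produces the quadratic-remainder bound with the correct constants: one must expand $\nabla (\text{LOO objective})(\bt) = 0$ around $\st$, use the integral form of the Hessian remainder together with $L_{\mathcal{H}}$-Lipschitzness, and then algebraically isolate $\bt - \nt$ while correctly accounting for the $1/n$ normalization that differs between the averaged training loss $\L(\theta)=\tfrac1n\sum l(z_i,\theta)$ and the single-sample gradient $\nabla_\theta \L(z_i,\st)$ appearing in Eq.~(\ref{eq:one_step_newton}). Keeping the powers of $n$ and of $(\lambda_{min}+\alpha)$ consistent through this manipulation is where an off-by-a-factor error is easiest to make, and it is precisely what pins down the final $O(1/(\lambda_{min}+\alpha)^3)$ rate. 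Everything else — the triangle inequality, the $L_F$-Lipschitz propagation, and the uniform gradient bound $N_g$ — is routine once that core estimate is in place.
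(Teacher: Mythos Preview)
Your approach is essentially the paper's: reduce $E_{Nt}$ to $\|\bt-\nt\|_2$ via $L_F$-Lipschitzness of $F$, then control that distance by a second-order Taylor/Newton argument combining strong convexity (giving the $1/(\lambda_{\min}+\alpha)$ factors) with the $L_{\mathcal H}$-Lipschitz Hessian (giving the quadratic remainder). The one substantive difference is the intermediate quantity you route through. You bound $\|\bt-\nt\|_2$ via the quadratic-convergence estimate $\|\bt-\nt\|_2\lesssim L_{\mathcal H}\,\|\bt-\st\|_2^2/(\lambda_{\min}+\alpha)$ and then separately bound $\|\bt-\st\|_2$. The paper instead first uses strong convexity of $\L_{-z_i}$ at its minimizer $\bt$ to get $\|\bt-\nt\|_2\le \tfrac{2}{\lambda_{\min}+\alpha}\|\nabla_\theta \L_{-z_i}(\nt)\|_2$, and then bounds that gradient norm by the Hessian-Lipschitz remainder $\tfrac{nL_{\mathcal H}}{2}\|\Delta\theta^{Nt}\|_2^2$ with $\Delta\theta^{Nt}=\nt-\st$.

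The paper's choice is slightly cleaner precisely on the point you flagged as the main obstacle: because $\Delta\theta^{Nt}$ is given by the closed-form Newton update, one has directly $\|\Delta\theta^{Nt}\|_2=\|\mathcal H_{\hat\theta}^{-1}\nabla_\theta \L(z_i,\st)\|_2\le N_g/(\lambda_{\min}+\alpha)$, with no need for a separate perturbation argument on $\|\bt-\st\|_2$ and hence no $1/n$ floating around to reconcile. Your route also works once the normalizations are tracked consistently, but if you replace $\|\bt-\st\|_2$ by the explicit $\|\nt-\st\|_2$ in your quadratic estimate, the powers of $n$ and of $(\lambda_{\min}+\alpha)$ fall out without the bookkeeping ambiguity you were worried about.
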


\begin{proof}
The proof technique is adapted to our setting from the analysis of Newton's method in convex optimization \cite{boyd2004convex} and influence function \cite{koh2019accuracy}.

We can bound $E_{Nt} = \left|F(\bt) - F(\nt)\right|$ through bounding the norm of the parameter difference $\left\|\bt - \nt\right\|_2,$ then by Lipschitz continuity that the gradient of $F$ is bounded by $L_F$, the bound on $F$ can be obtained accordingly.

Since $\L$ is strongly convex and empirical risk $\L_{-z_i}$ is minimized by $\bt$. Through the property of convex function, we can bound the parameter difference $\left\|\bt - \nt\right\|_2$ with respect to the gradient norm at $\nt$:

\begin{equation}
    \left\|\bt - \nt\right\|_2 \leq \frac{2}{\lambda_{min} + \alpha} \left\|\nabla_{\theta} \L_{-z_i}(\nt)\right\|_2 \nonumber
\end{equation}
We can then transform the problem into bounding $\left\|\nabla_{\theta} \L_{-z_i}(\nt)\right\|_2$.

Define the Newton step $\Delta \theta^{Nt}$ as $\nt - \st$, we first show that $\Delta \theta^{Nt}$ can be written in terms of the first and second order derivatives of the empirical risk $\L_{-z_i}$. According to Eq.(\ref{eq:one_step_newton}):

\begin{align}
    \nabla_{\theta}\L(z_i, \st) &= -\sum_{j=1, j\neq i}^n \nabla_{\theta} l(z_j, \st) = -\nabla_{\theta}\L_{-z_i}(\st). 
    \label{eq:first_order}
\end{align}
\begin{align}
    \mathcal{H}_{\hat{\theta}} &= \sum_{j=1, j \neq i}^n \nabla_{\theta}^2 l(z_j, \st) = \nabla_{\theta}^2 \L_{-z_i}(\st). 
    \label{eq:sec_order}
\end{align}
Note that the second equality of Eq.(\ref{eq:first_order}) is due to the fact that the gradient of a strongly convex function at the optimum is zero, i.e., $\sum_{i=1}^n \nabla_{\theta} l(z_i, \st) = 0$. Then we have
\begin{align}
    \Delta \theta^{Nt} = \mathcal{H}_{\hat{\theta}}^{-1} \nabla_{\theta}\L(z_i, \st) = -\left[\nabla_{\theta}^2 \L_{-z_i}(\st)\right]^{-1} \nabla_{\theta} \L_{-z_i}(\st). \nonumber
\end{align}
Combining the above results, we can then bound the gradient norm $\left\|\nabla_{\theta} \L_{-z_i}(\nt)\right\|_2$ as the following:
\begin{align*}
    & \left\|\nabla_{\theta} \L_{-z_i}(\nt)\right\|_2 \\
    &= \left\|\nabla_{\theta} \L_{-z_i}(\st + \Delta \theta^{Nt})\right\|_2 \\
    &= \left\|\nabla_{\theta} \L_{-z_i}(\st + \Delta \theta^{Nt}) - \nabla_{\theta} \L_{-z_i}(\st) - \nabla_{\theta}^2 \L_{-z_i}(\st) \Delta \theta^{Nt}\right\|_2 \\
    &= \left\| \int_0^1 \left[ \nabla_{\theta}^2 \L_{-z_i} (\st + t\Delta\theta^{Nt}) - \nabla_{\theta}^2 \L_{-z_i} (\st)\right] \Delta \theta^{Nt} dt \right\|_2 \\
    &\leq \frac{n L_{\mathcal{H}}}{2} \|\Delta \theta^{Nt}\|_2^2 \\
    &= \frac{n L_{\mathcal{H}}}{2} \left\| \left[\nabla_{\theta}^2 \L_{-z_i}(\st)\right]^{-1} \nabla_{\theta} \L_{-z_i}(\st) \right\|_2^2 \\
    &\leq \frac{n L_{\mathcal{H}}}{2(\lambda_{min} + \alpha)^2} \left\|\nabla_{\theta}^2 \L_{-z_i}(\st)\right\|_2^2 \\
    &\leq \frac{n L_{\mathcal{H}} N_g^2}{2(\lambda_{min} + \alpha)^2}.
\end{align*}
Putting this bound together with the $L_F-$Lipschitz of $F(\theta)$ we can get the result of Proposition 1.\\
\end{proof}
\textbf{Remark} ~Note that Proposition 1 provides the approximation error of one-step Newton in each iteration of the algorithm (\ref{alg:explain}). If we choose a training subset that has $m$ samples, the overall approximation error bound is then $\frac{n m^2 L_F L_{\mathcal{H}} N_g^2}{(\lambda_{min} + \alpha)^3}$. The approximation is very accurate when the parameter $\alpha$ is large, or the gradient norm $N_g$ and sample size $m$ is small. 

\begin{figure*}[t!]
\centering
{
\setlength{\tabcolsep}{1pt}
\renewcommand{\arraystretch}{1} 
\begin{tabu}{cc}
\includegraphics[width=.35\textwidth]{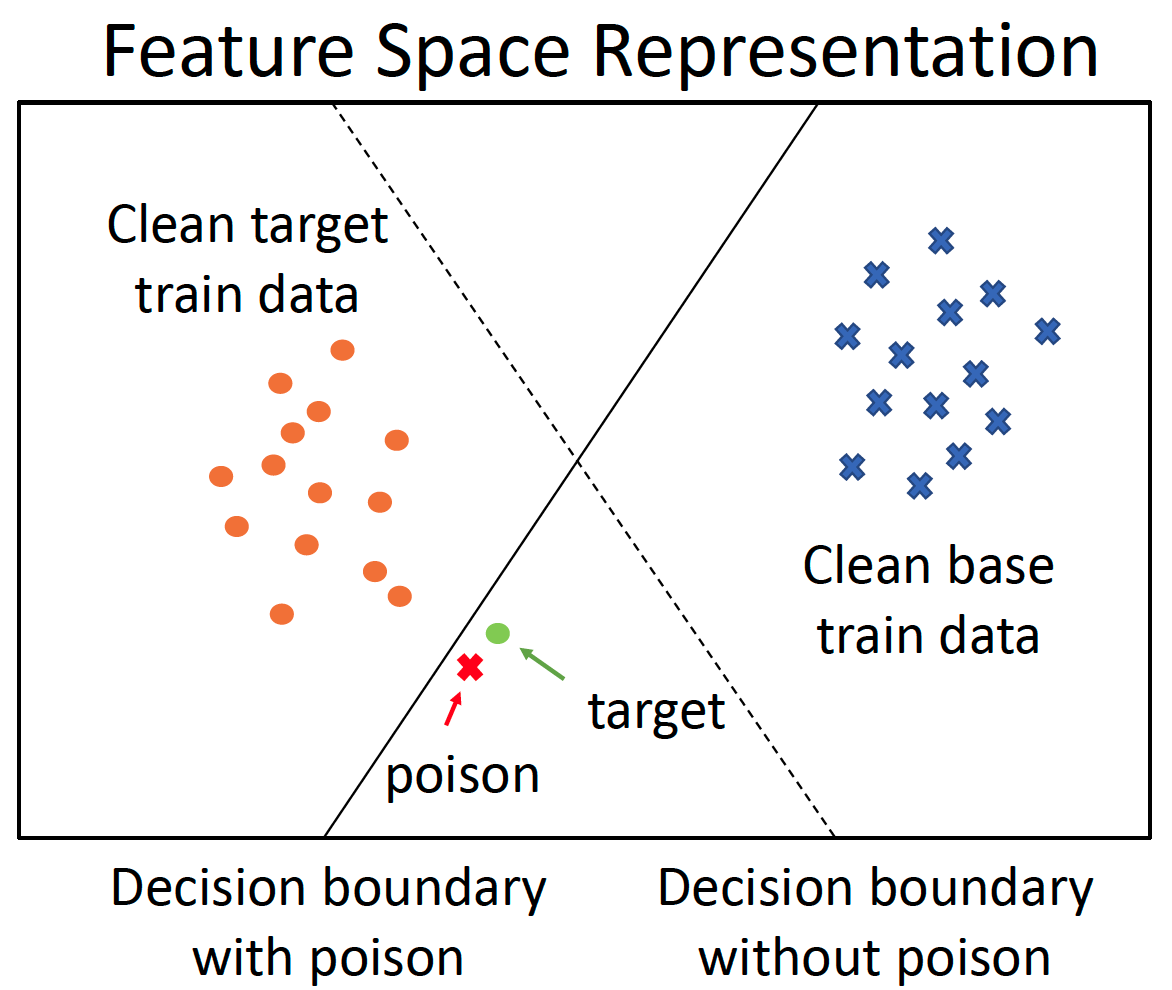} &
\includegraphics[width=.45\textwidth]{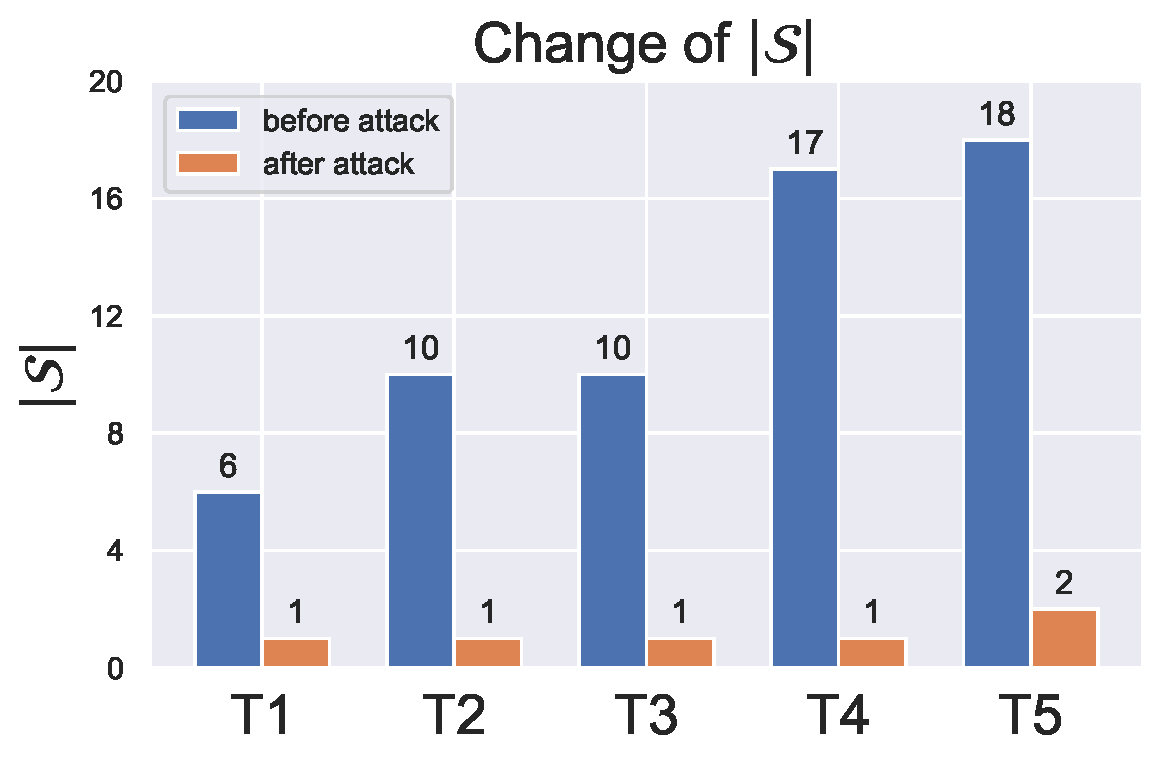} \\
(a) Feature Space Demonstration & (b) Data Poisoning Detection \\
\end{tabu}}
\caption{(a) An illustration of  how a successful attack might work by shifting the decision boundary. 
(b) Comparison of the MFS size $|\mathcal{S}|$ constructed by our algorithm, before and after attack. $|\mathcal{S}|$ decreases clearly if the training set contains poisoning attack.}
\label{fig:poison}
\end{figure*}

\begin{figure*}[t!]
\centering
{\includegraphics[width=\textwidth]{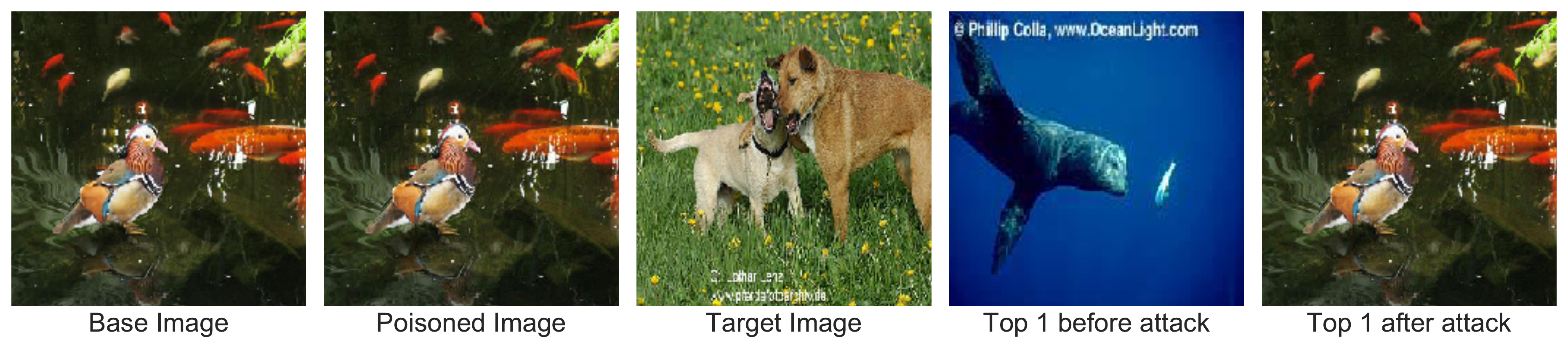}}
\caption{An example of data poisoning detection. From left to right: an image in the training set; its corresponding poisoned image crafted by \cite{shafahi2018poison}; a test set image we want to attack; 
the top influential image before and after attack.}
\label{fig:poison_imgs}
\end{figure*}

\begin{figure*}[ht]
\begin{minipage}{\textwidth}
    \centering
\begin{tabular}{@{\hspace{-1.4ex}} c @{\hspace{-1.4ex}} @{\hspace{-1.4ex}} c @{\hspace{-1.4ex}} @{\hspace{-1.4ex}} c @{\hspace{-1.4ex}}}
\begin{tabular}{c}
    \includegraphics[width=.33\textwidth]{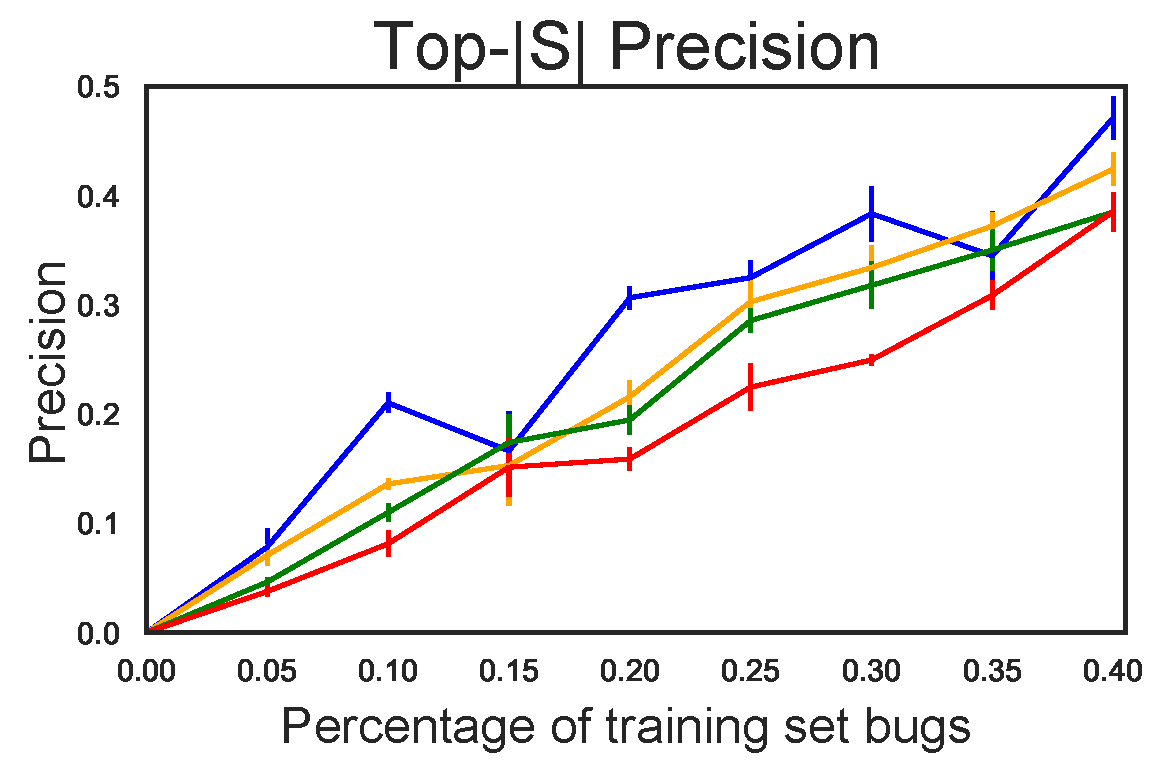}
    \\
    {\small{(a)}}
    \end{tabular} &
    \begin{tabular}{c}
    \includegraphics[width=.33\textwidth]{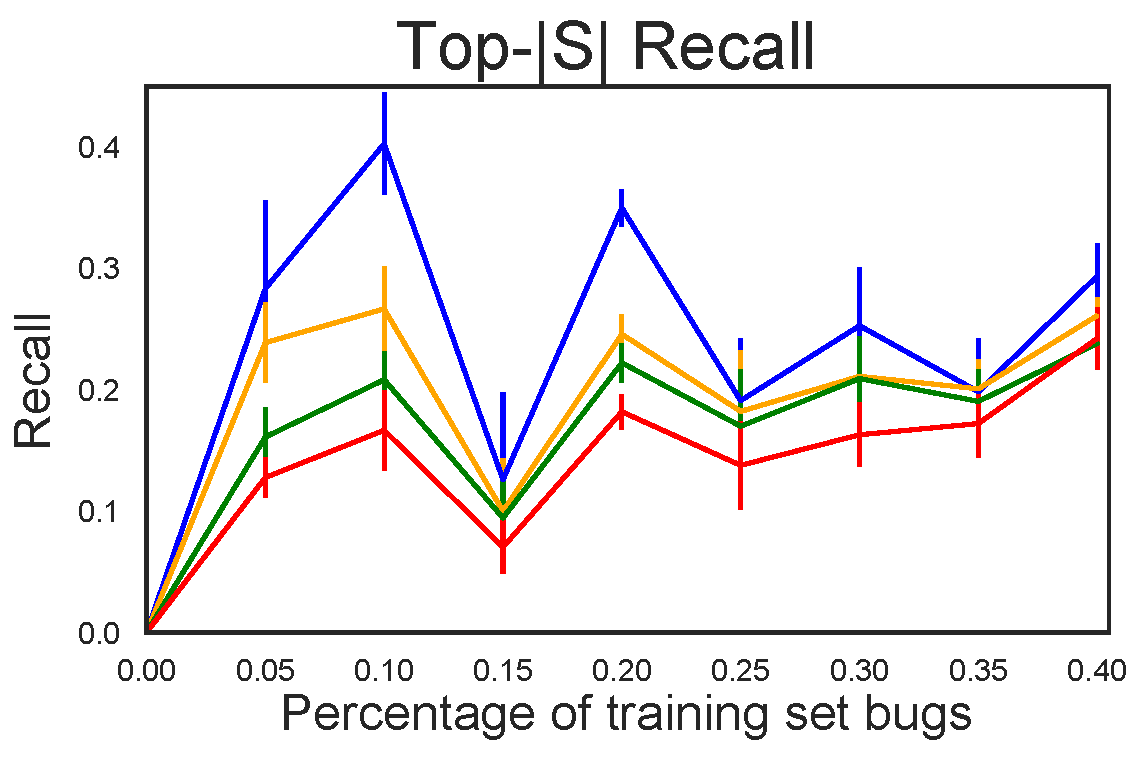}
    \\
    {\small{(b)}}
    \end{tabular} & 
    \begin{tabular}{c}
    \includegraphics[width=.33\textwidth]{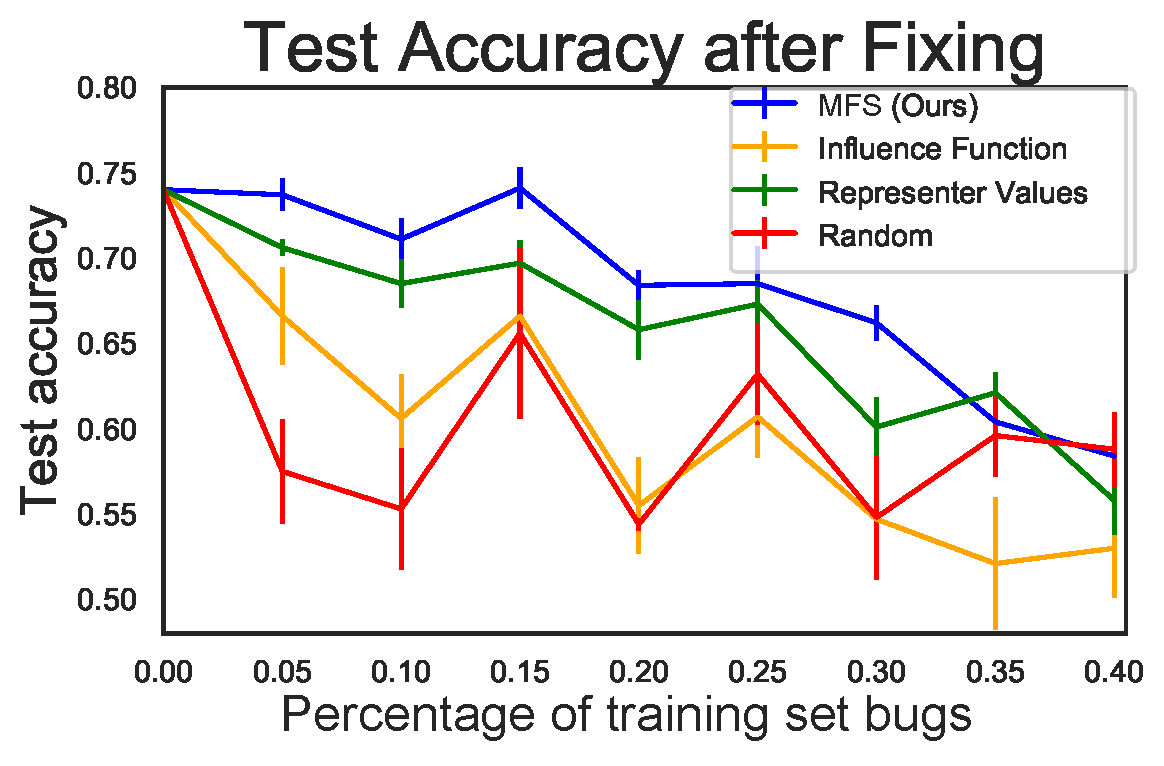} 
    \\
    {\small{(c)}}
\end{tabular}
\end{tabular}
\end{minipage}
    \caption{We conduct multiple experiments with different percentage of bugs in training set.
    (a) Precision and (b) Recall are measured to evaluate the correctness of debugging suggestions provided by each method. All algorithms have no access to the test data. MFS achieves the highest test accuracy (c) among others after respectively fixing their identified training bugs.}
    \label{fig:debug}
\end{figure*}



\section{Experiments}



We perform various quantitative and qualitative experiments to demonstrate the effectiveness of our approach in different application domains, including detection of data poisoning attack, training set debugging, explaining image classification, and understanding loan decisions. Since MFS is a sample-based explanation method, we use state-of-the-art works \cite{koh2017understanding, yeh2018representer} in the same category as our baselines.


\begin{figure*}[t!]
        \centering
        \includegraphics[width=.9\linewidth]{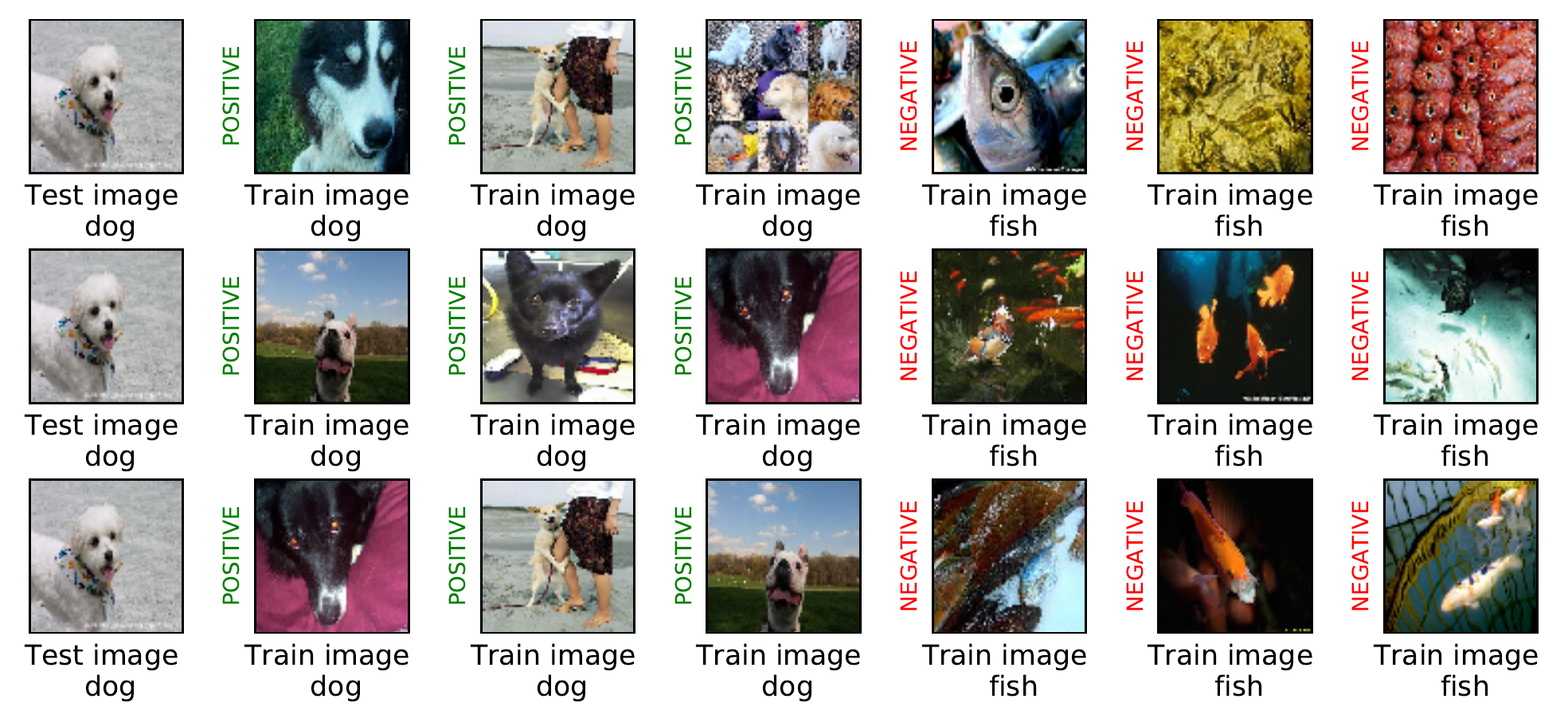}
        \caption{Comparison of the top three positive and negative influential training images for a test point (left-most column) using MFS (first row), influence functions (second row), and representer points method (third row).}
        \label{fig:showcase}
\end{figure*}

\subsection{Data Poisoning Detection}
Data poisoning is a type of attack on machine learning models in which an attacker manipulates the behavior of the model by adding adversarial examples to the training set. \cite{shafahi2018poison} has shown recently that complex neural network models, especially in transfer learning settings, are highly vulnerable to data poisoning attack: by crafting a poisoning sample that is close to the target testing data in the feature space but with a different label, the attacker can achieve a 100\% success rate of causing a misclassification. 
We can visually demonstrate the data poisoning process in Figure \ref{fig:poison}(a), where a poison instance with opposite class is placed close to the target data point and flips the decision boundary to cause misclassification. MFS then becomes a natural choice in detecting such poisoning samples within a training set by checking the most influential samples of a possible wrong decision.


We attacked a pretrained Inception V3 \cite{szegedy2016rethinking} network 
on ImageNet \cite{russakovsky2015imagenet} dog-vs-fish dataset. 
500 instances were selected from each class as the training data, and we randomly select 5 targets (T1 to T5) from the test set to create poisoning samples.
Figure \ref{fig:poison}(b) shows the size of MFS constructed by our method, before and after attacking. It requires a greater amount of supporting points to flip the decision in 
the clean dataset, than in the dataset with poisoning sample evaluated on the same test point.
This is expected, since the decision of the attacked test point will be flipped once the poisoning sample is found and removed. 
 Figure \ref{fig:poison_imgs} shows one instance of the data poisoning detection. 
 We can see MFS is able to correctly identify the harmful image as the top 1 support after the attack. Note that other sample-based explanation baselines, such as influence function \cite{koh2017understanding} and representer points \cite{yeh2018representer}, cannot be applied for this application. This is because they cannot measure the threshold for decision flipping, making it hard to identify the set $\mathcal{S}$ that supports a decision.


\subsection{Training Set Debugging}
Bugs and flaws in the training set can adversely affect learning models that generalized well. Because the whole training set is often too large for human inspection, it is highly desirable to have algorithms to automatically identify and prioritize the group of most problematic points for checking. 
Previous works \cite{koh2017understanding, yeh2018representer, khanna2018interpreting, zhang2018training} have suggested different approaches for solving this problem. 
However, 
%
these methods do not provide an indication of how much training data needs to be checked in order to improve the test accuracy to a desired level. 
%
%
Our method defines this threshold and achieves a better quantitative performance in generalization. 

We use the Enron1 spam dataset \cite{metsis2006spam} with 3317 training, 830 validation, and 1035 test examples, a certain percentage of the labels in the training set are randomly flipped. 
After training a logistic regression classifier on the noisy training set, 
we construct the MFS for incorrect predictions with high confidence in the validation set. The result shows there is a high probability that the bugs lie within MFS.
We compare our method with influence function \cite{koh2017understanding}, representer point \cite{yeh2018representer}, and random selection baselines. We analyze the top-$|\mathcal{S}|$ influential samples selected by each algorithm. 

Define $\mathcal{B}$ as the set that contains all the training set bugs, and $\mathcal{S}$ as the set of training samples selected by each method. Figure \ref{fig:debug} (a) and (b) demonstrate the precision ($\frac{|\mathcal{B} \cap \mathcal{S}|}{|\mathcal{S}|}$) and recall ($\frac{|\mathcal{B} \cap \mathcal{S}|}{|\mathcal{B}|}$) of the top-$|\mathcal{S}|$ selection. We find that MFS tends to provide the most correct suggestions, and consequently the highest test accuracy after fixing these bugs.   

\begin{figure*}[t!]
    \centering
    {\begin{tabu}{ccc}
        \includegraphics[width=.3\linewidth]{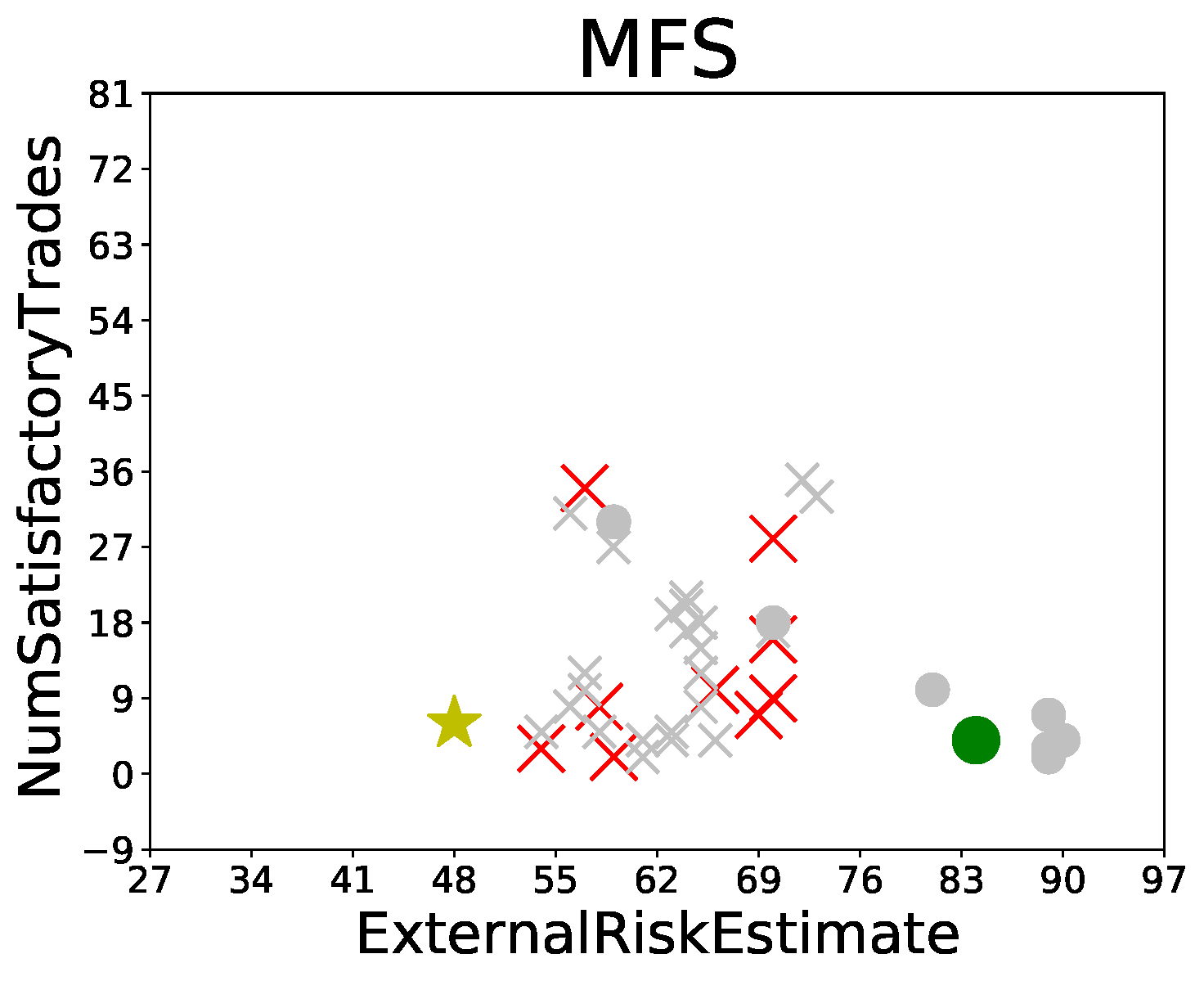} &  
        \includegraphics[width=.3\linewidth]{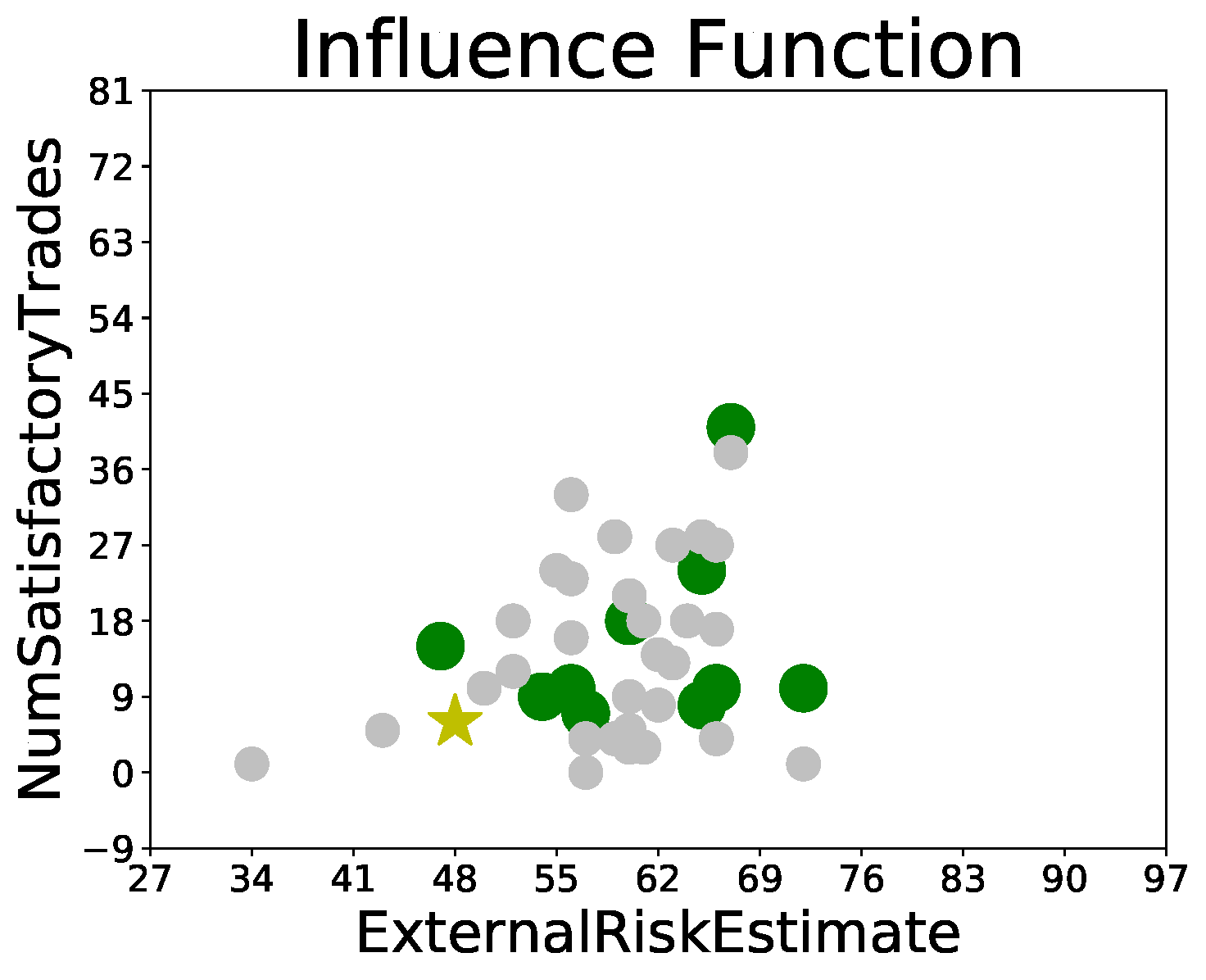} &
        \includegraphics[width=.326\linewidth]{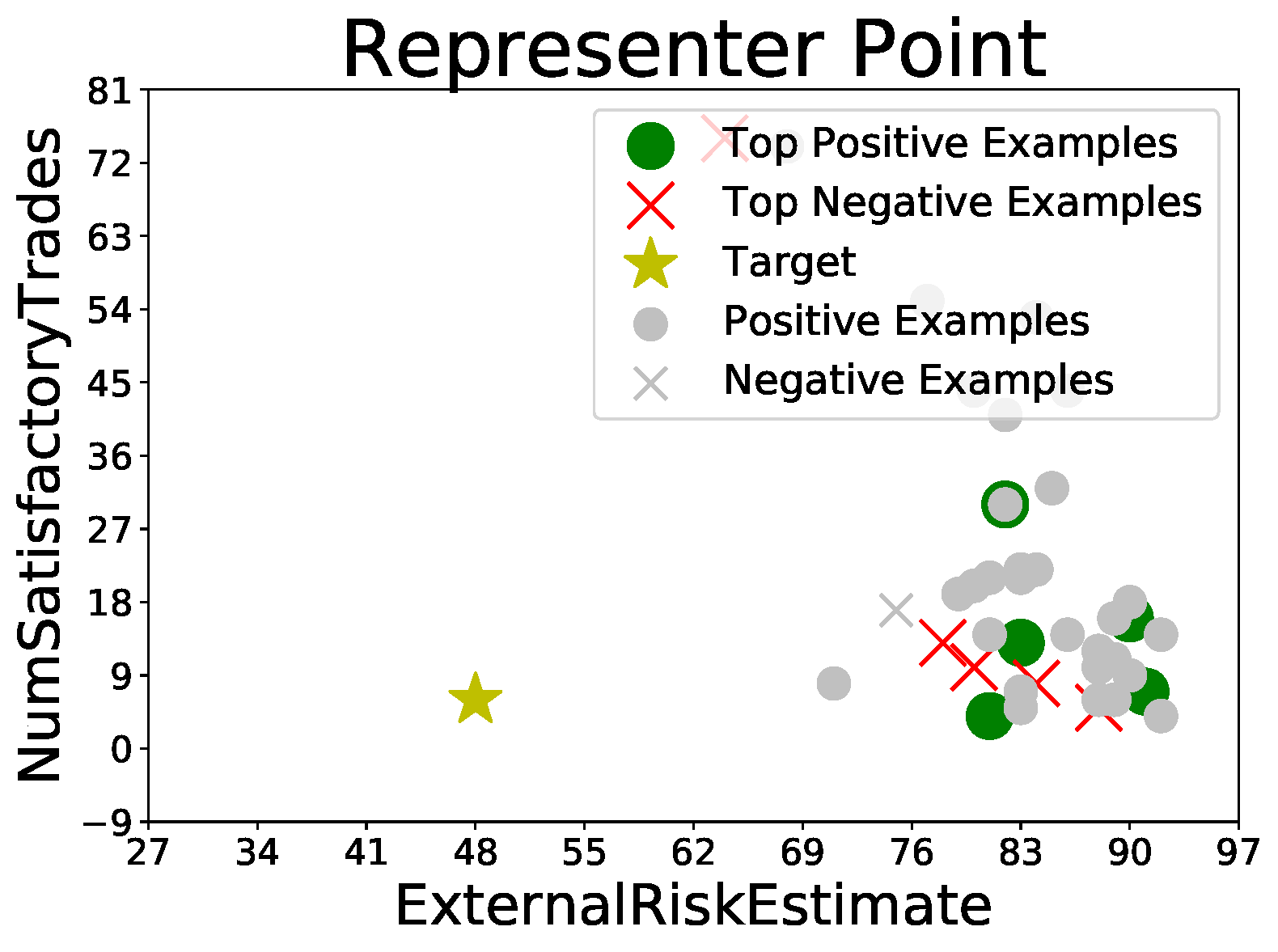} \\
        \includegraphics[width=.312\linewidth]{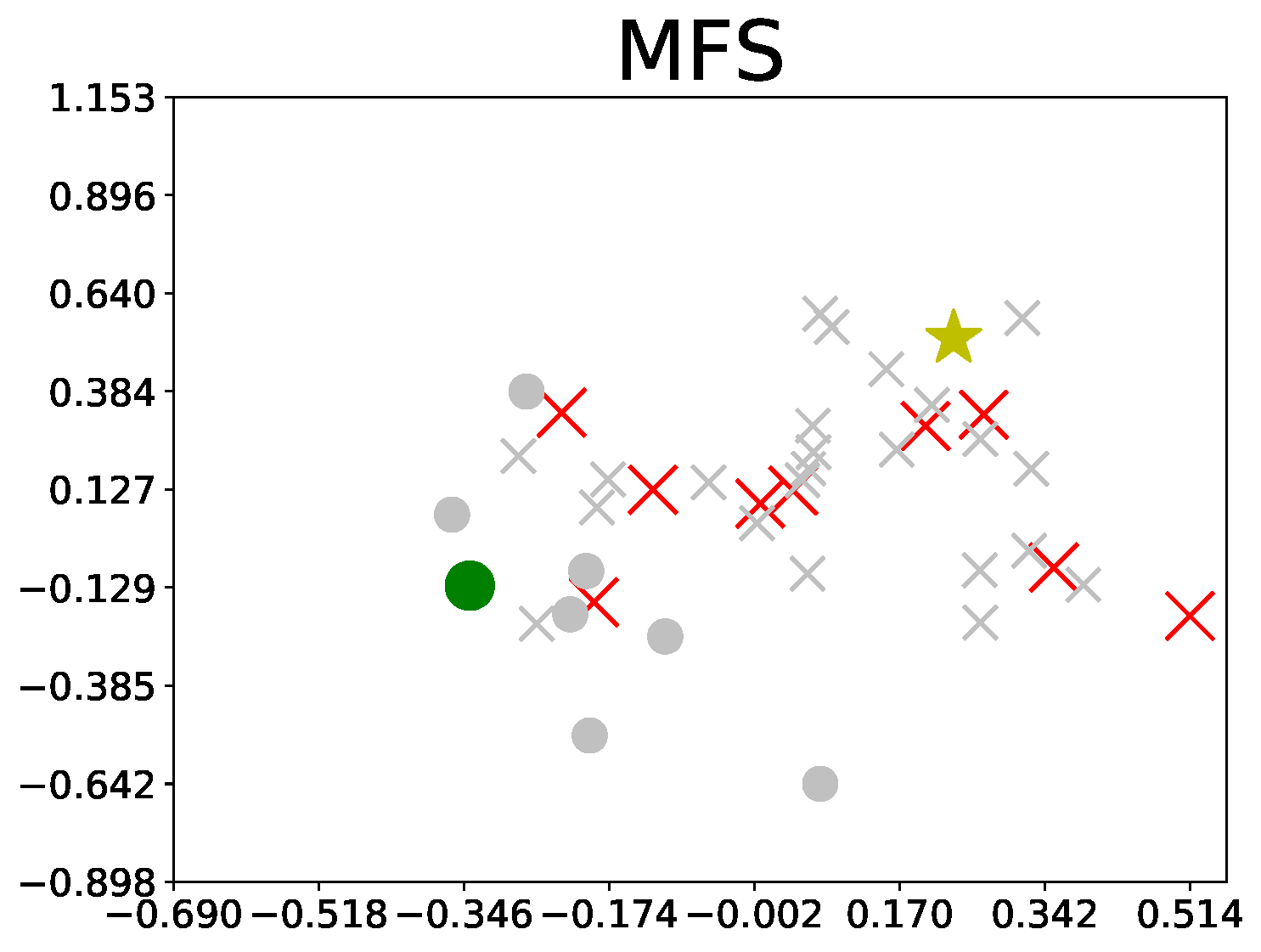} &  
        \includegraphics[width=.32\linewidth]{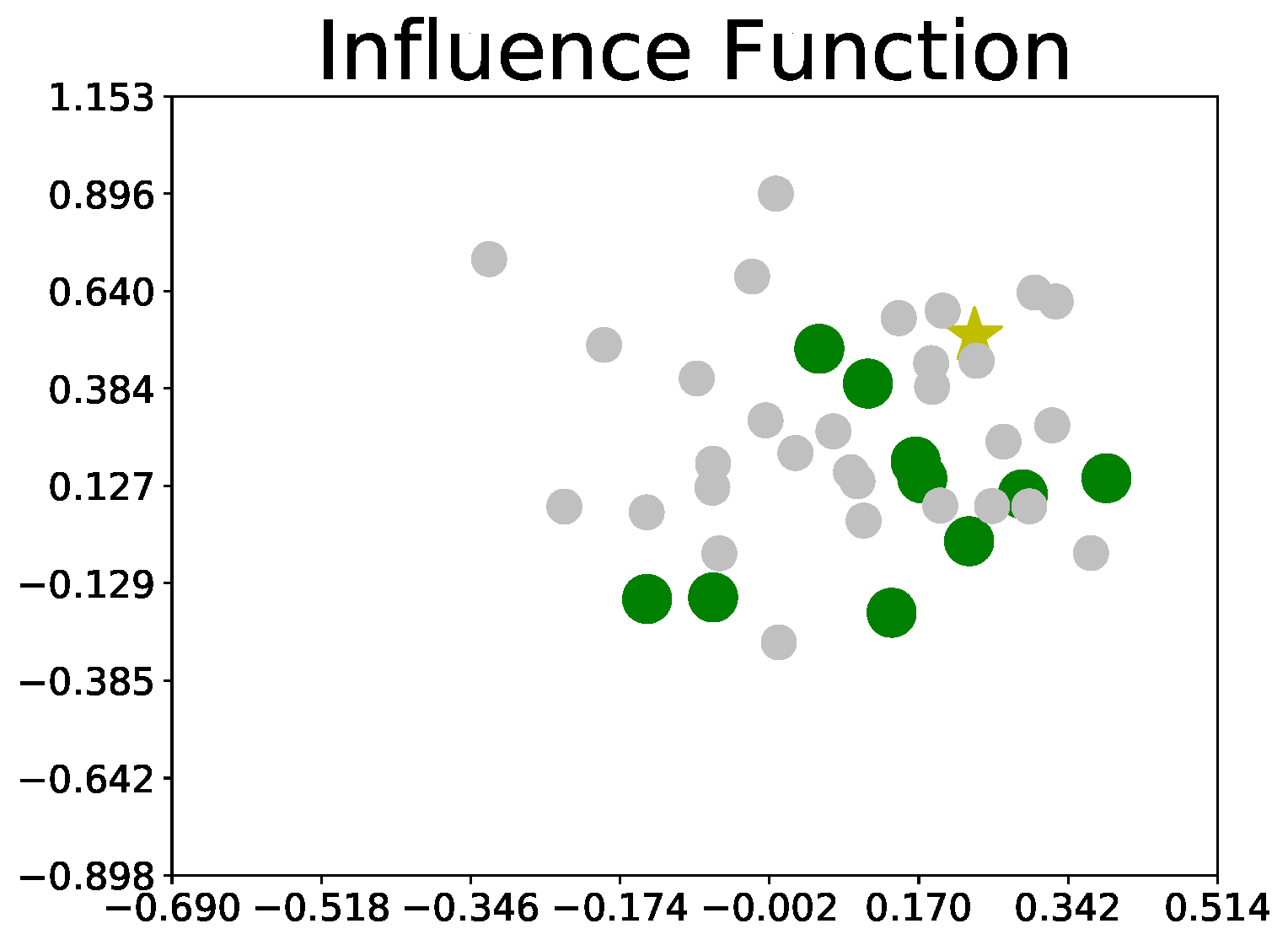} &
        \includegraphics[width=.317\linewidth]{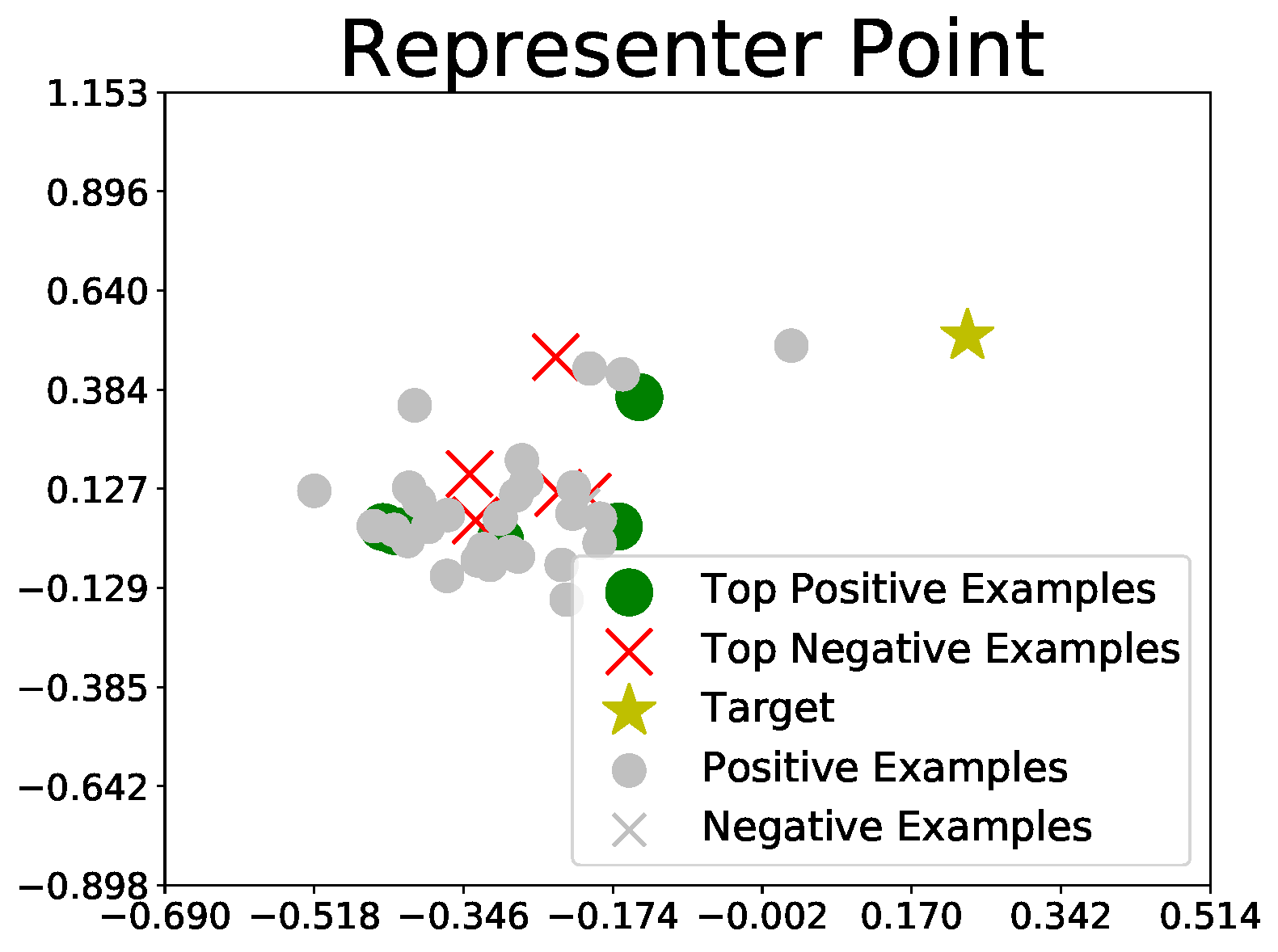} \\
        (a) & (b) & (c)\\
    \end{tabu}}
    \caption{Distribution of training samples that support the wrong prediction of target with 99\% confidence. First row: visualization in the space of two selected features; second row: visualization after PCA transformation. The top 10 positive / negative supports are labeled as green dot / red cross, the following 30 positive / negative supports are labeled as grey dot / grey cross.}
    \label{fig:finance}
\end{figure*}

\begin{table*}[h]
\caption{Time (in seconds) required for obtaining supporting dataset, influence function / representer value for all training points w.r.t. a selected target in test set. We run the experiments using one Nvidia RTX 2080-Ti GPU.}
\renewcommand{\arraystretch}{1.4}
    \centering
    \begin{tabular}{c||c||c||c}
    \hline
    \multicolumn{1}{c||}{Dataset} & \multicolumn{1}{c||}{MFS} & 
    \multicolumn{1}{c||}{Influence Function} & 
    \multicolumn{1}{c}{Representer Point}\\
    \hline
        ImageNet dog-fish & $0.75 \pm 0.06$ & $4.69 \pm 0.61$ & $0.17 \pm 0.05$ \\
        CIFAR-10 & $9.71 \pm 3.62$ & $337.59 \pm 79.22$ & $8.93 \pm 1.21$ \\
    \hline
    \end{tabular}
    \label{tab:my_label}
\end{table*}

\subsection{Most Relevant Positive and Negative Examples}
We visualize and compare our method with the influence function and representer point selection method on the ImageNet dog-fish classification dataset. 
%
We again use the pre-trained Inception V3 \cite{szegedy2016rethinking} model and fine-tune on the dog-fish classification dataset; the final test accuracy is 99\%. 
In our experiments, we generate a sequence of supporting points by the MFS and baselines, 
including both \emph{positive supporting points}, which have the same label as the target image, 
and \emph{negative supporting points}, which support the classifier's decision but are with a different label. 
%
Figure \ref{fig:showcase} demonstrates an example of the top three positive and negative points chosen by three approaches.  
The target image that we use is a white teddy shown in the left column. 
The top row shows the top three most positive and negative supporting images 
provided by MFS. 
The other two rows show the result of the influence function and representer points. 
We can see that MFS tends to choose more similar images from the same class, e.g.,  white and similar dogs, while the other two tend to give more different images, e.g., black dog. 
We find that some positive images are selected by multiple methods, demonstrating their significance to the prediction. 
%
On the other hand, we find that different methods tend to select different negative examples.  

\subsection{Understanding Loan Decisions}


We then evaluate our algorithm on the HELOC (Home Equity Line of Credit) credit application dataset, which is used in FICO 2018 xML challenge to explain lending decisions. Since the available winning approach is a global interpretable model, we only use this dataset to compare against our sample-based local explanation baselines.
Figure \ref{fig:finance} shows the explanations provided by MFS and baselines for a wrongly predicted target, 
where the data points are visualized on 
 two most highly weighted features (\texttt{ExternalRiskEstimate} and \texttt{NumSatisfactoryTrades}), as well as the entire dataset after PCA transformation.  
In this experiment, we choose a target in the test set with a positive label (good credit performance) but is predicted as negative (bad credit performance) with high confidence by the classifier.   
We can see from Figure~\ref{fig:finance} column (a) that the target is surrounded by samples with negative labels, with only a few positive examples at a further distance. This provides a reasonable explanation of the mistake and allows people to further inspect the negative supporting points to fix the problem. On the other hand, we find that the influence function method (Figure~\ref{fig:finance} column (b)) can only select positive supporting points with positive labels; 
although the selected points are close to the target, they do not explain how the mistake is made. In comparison, we find that the representer point method (Figure~\ref{fig:finance} column (c)) provides supporting points from both classes, but all with large Euclidean distance from the target and hence provide no convincing explanation to the mistake. 

\subsection{Advantages over Baseline Methods}
The above positive results demonstrate the superiority of MFS in providing decision explanations for classification tasks. One critical reason is MFS connects both the original decision and its counterfactuals with a small set of training samples, making the explanation more intuitive and efficient. Although methods like influence functions and representer points are technically sound, explaining model decisions based on slight perturbations on labels and model weights on the penultimate layer of NNs are not the most natural choice. 

Besides evaluating multiple real-world tasks, we keep track of the computational cost of constructing MFS on datasets, including ImageNet dog-fish classification (900 training images for each class) and the whole CIFAR-10. We find that the result compares favorably with influence function. 
We perform 10 experiments on different targets and report in Table~\ref{tab:my_label} the average and standard deviation of the computation time (including the fine-tuning when computing representer points). 
We can find that our method, although being a bit slower than the representer points method, 
achieves much lower computational cost than the influence function method. 
This is because we only need to obtain the supporting instances that contain a small number of training samples. Our strategy that performs efficient approximation for parameter update helps to scale on large datasets.  

\section{Conclusion}




We developed a model-agnostic and intuitive algorithm to identify a set of responsible samples (MFS) that a given model decision relies on. We build the supporting set iteratively by solving a quadratic programming problem with linear constraint, and speed up the re-training process through updating the model parameter with one-step Newton. To demonstrate the effectiveness of our method empirically, we apply it to a variety of real-world tasks and multiple benchmark datasets, and compare the performance of MFS with other prevailing explanation approaches. Results show that MFS produces better quantitative results as well as more intuitive and reliable explanations for local model predictions. 

In future work, we would like to explore the application of our method in other domains where interpretability is desired and required, such as medical decisions, time-series/sequential data or language/speech models. We also plan to investigate the robustness of our method and look into the uniqueness of the selected set of training samples.

\bibliography{reference}
\bibliographystyle{ieeetr}

\end{document}